\newcommand{\xmark}{\ding{55}}%
\newtheorem{assumption}{Assumption}
\renewcommand{\epsilon}{\varepsilon}
\newcommand{\bx}{\mathbf{x}}
\newcommand{\bz}{\mathbf{z}}
\newcommand{\bI}{\mathbf{I}}
\newcommand{\bvarepsilon}{\bm{\varepsilon}}
\newcommand{\cN}{\mathcal{N}}
\newcommand{\cX}{\mathcal{X}}
\newcommand{\cZ}{\mathcal{Z}}
\newcommand{\bbE}{\mathbb{E}}
\newcommand{\bbR}{\mathbb{R}}
\newcommand{\bzero}{\mathbf{0}}
\begin{document}

\title{Reparameterized Sampling for Generative Adversarial Networks}

\author{Yifei Wang\inst{1}
\and
Yisen Wang\inst{2}\footnotemark[1](\Letter)
\and
Jiansheng Yang\inst{1}
\and
Zhouchen Lin\inst{2}
}
\authorrunning{Y. Wang et al.}

\institute{
School of Mathematical Sciences, Peking University, Beijing 100871, China
\email{yifei\_wang@pku.edu.cn}, %
\email{yjs@math.pku.edu.cn}
\and 
Key Lab. of Machine Perception (MoE), School of EECS, Peking University, Beijing 100871, China \\
\email{yisen.wang@pku.edu.cn},
\email{zlin@pku.edu.cn}
}

\toctitle{Reparameterized Sampling for Generative Adversarial Networks}

\tocauthor{Yifei Wang, Yisen Wang, Jiansheng Yang, Zhouchen Lin }

\maketitle

\renewcommand{\thefootnote}{\fnsymbol{footnote}} 
\footnotetext[1]{Corresponding Author}

\begin{abstract}
Recently, sampling methods have been successfully applied to enhance the sample quality of Generative Adversarial Networks (GANs). However, in practice, they typically have poor sample efficiency because of the independent proposal sampling from the generator. 
In this work, we propose REP-GAN, a novel sampling method that allows general dependent proposals by REParameterizing the Markov chains
into the latent space of the generator.
Theoretically, we show that our reparameterized proposal admits a closed-form Metropolis-Hastings acceptance ratio.
Empirically, extensive experiments on synthetic and real datasets demonstrate that our REP-GAN largely improves the sample efficiency and obtains better sample quality simultaneously.
\keywords{Generative Adversarial Networks  \and Sampling \and Markov Chain Monte Carlo \and Reparameterization.}
\end{abstract}

\section{Introduction}

Generative Adversarial Networks (GANs) \cite{goodfellow2014generative} have achieved a great success on generating realistic images in recent years \cite{karras2019style,brock2018large}. Unlike previous models that explicitly parameterize the data distribution, GANs rely on an alternative optimization between a generator and a discriminator to learn the data distribution implicitly. However, in practice, samples generated by GANs still suffer from problems such as mode collapse and bad artifacts. 

Recently, sampling methods have shown promising results on enhancing the sample quality of GANs by making use of 
the information in the discriminator.
In the alternative training scheme of GANs, the generator only performs a few updates for the inner loop and has not fully utilized the density ratio information estimated by the discriminator. Thus, after GAN training, the sampling methods propose to further utilize this information to bridge the gap between the generative distribution and the data distribution in a fine-grained manner.
For example, DRS \cite{azadi2019discriminator} applies rejection sampling, and MH-GAN \cite{turner2019metropolis} adopts Markov chain Monte Carlo (MCMC) sampling for the improved sample quality of GANs. Nevertheless, these methods still suffer a lot from the sample efficiency problem. For example, as will be shown in Section \ref{sec:experiments}, MH-GAN's average acceptance ratio on CIFAR10 can be lower than 5\%, which makes the Markov chains slow to mix. As MH-GAN adopts an \emph{independent} proposal $q$, i.e., $q(\bx'|\bx)=q(\bx')$, the difference between samples can be so large that the proposal gets rejected easily. 

\begin{figure}[t]
\centering
\includegraphics[width=.9\linewidth]{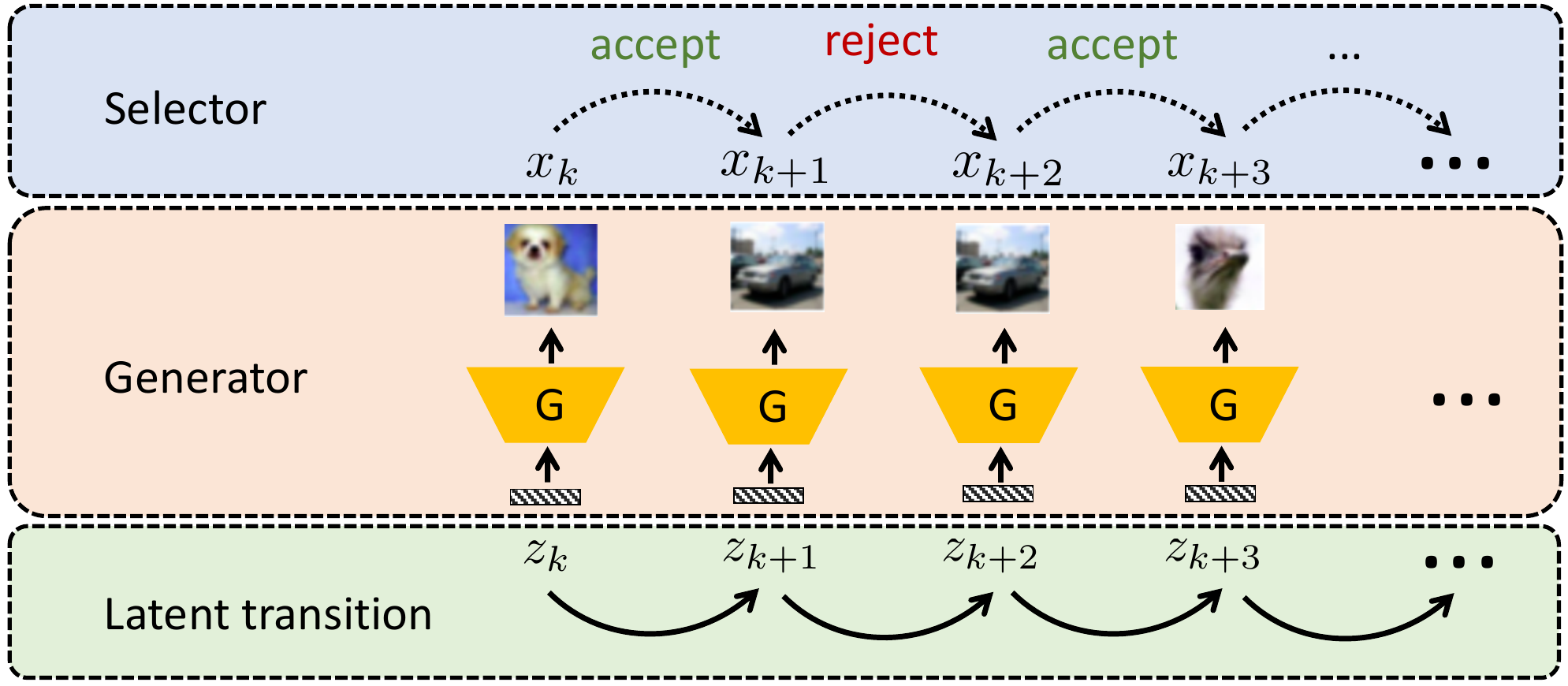}
\caption{Illustration of REP-GAN's reparameterized proposal with two pairing Markov chains, one in the latent space $\cZ$, and the other in the sample space $\cX$. }
\label{fig:pairing-chains}
\end{figure}

To address this limitation, we propose to generalize the independent proposal to a general \emph{dependent} proposal $q(\bx'|\bx)$. To the end, the proposed sample can be a refinement of the previous one, which leads to a higher acceptance ratio and better sample quality. We can also balance between the exploration and exploitation of the Markov chains by tuning the step size. However, it is hard to design a proper dependent proposal in the high dimensional sample space $\cX$ because the energy landscape could be very complex \cite{neal2010MCMC}. 

Nevertheless, we notice that the generative distribution $p_g(\bx)$ of GANs is implicitly defined as the push-forward of the latent prior distribution $p_0(\bz)$, and designing proposals in the low dimensional latent space is generally much easier. Hence, GAN's latent variable structure motivates us to design a \emph{structured dependent proposal} with two pairing Markov chains, one in the sample space $\cX$ and the other in the latent space $\cZ$. As shown in Figure \ref{fig:pairing-chains}, given the current pairing samples $(\bz_k,\bx_k)$, we draw the next proposal $\bx'$ in a bottom-to-up way: 1) drawing a latent proposal $\bz'$ following $q(\bz'|\bz_k)$; 2) pushing it forward through the generator and getting the sample proposal $\bx'=G(\bz')$; 3) assigning $\bx_{k+1}=\bx'$ if the proposal $\bx'$ is accepted, otherwise $\bx_{k+1}=\bx_k$ if rejected. By utilizing the underlying structure of GANs, the proposed reparameterized sampler becomes more efficient in the low-dimensional latent space. 
We summarize our main contributions as follows:
\begin{itemize}
    \item We propose a structured dependent proposal of GANs, which reparameterizes the sample-level transition $\bx\to\bx'$ into the latent-level  $\bz\to\bz'$ with two pairing Markov chains. We prove that our reparameterized proposal admits
    a tractable acceptance criterion.
    \item Our proposed method, called REP-GAN, serves as a unified framework for the existing sampling methods of GANs. It provides a better balance between exploration and exploitation by the structured dependent proposal, and also corrects the bias of Markov chains by the acceptance-rejection step. 
    \item Empirical results demonstrate that REP-GAN achieves better image quality and much higher sample efficiency than the state-of-the-art methods on both synthetic and real datasets. 
\end{itemize}

\begin{table}[t]
    \caption{Comparison of sampling methods for GANs in terms of three effective sampling mechanisms. 
    }
    \label{tab:compare-sample-methods}
    \begin{center}
    \vspace{-0.1 in}
    \begin{tabular}{lccc}
    \toprule
    {\bf Method} & {\bf Rejection step~~} & {\bf Markov chain~~} & {\bf Latent gradient proposal} \\
    \midrule 
    GAN & \xmark & \xmark & \xmark \\
    DRS \cite{azadi2019discriminator} & \checkmark & \xmark & \xmark \\
    MH-GAN \cite{turner2019metropolis} & \checkmark & \checkmark & \xmark \\
    DDLS \cite{che2020your} & \xmark & \checkmark & \checkmark \\
    REP-GAN (ours) & \checkmark & \checkmark & \checkmark \\
    \bottomrule
    \end{tabular}
    \end{center}
\end{table}

\section{Related Work}

Although GANs are able to synthesize high-quality images, the minimax nature of GANs makes it quite unstable, which usually results in degraded sample quality. 
A vast literature has been developed to fix the problems of GANs ever since, including network modules \cite{miyato2018spectral}, training mechanisms \cite{metz2016unrolled} and  objectives \cite{arjovsky2017wasserstein}.

Moreover, there is another line of work using sampling methods to improve the sample quality of GANs. DRS \cite{azadi2019discriminator} firstly proposes to use rejection sampling. MH-GAN \cite{turner2019metropolis} instead uses the Metropolis-Hasting (MH) algorithm with an independent proposal. DDLS \cite{che2020your} and DCD \cite{song2020discriminator} apply  gradient-based proposals by viewing GAN as an energy-based model. Tanaka et al.~\cite{tanaka2019discriminator} proposes a similar gradient-based method named DOT from the perspective of optimal transport. 

Different from them, our REP-GAN introduces a structured dependent proposal through latent reparameterization, and includes all three effective sampling mechanisms, the Markov Chain Monte Carlo method, the acceptance-rejection step, and the latent gradient-based proposal, to further improve the sample efficiency. As shown in Table \ref{tab:compare-sample-methods}, many existing works are special cases of our REP-GAN. 

Our method also belongs to the part of the literature that combine MCMC and neural networks for better sample quality. Previously, some works combine variational autoencoders \cite{kingma2013auto} and MCMC to bridge the amorization gap \cite{salimans2015markov,hoffman2017learning,li2017approximate}, while others directly learn a neural proposal function for MCMC \cite{song2017nice,levy2017generalizing,wang2018meta}. Our work instead reparameterizes the high-dimensional sample-level transition into a simpler low-dimensional latent space via the learned generator network. 

\section{Background}
GANs model the data distribution $p_d(\bx)$ implicitly with a generator $G: \cZ\to\cX$ mapping from a low-dimensional latent space $\cZ$ to a high-dimensional sample space $\cX$,
\begin{equation}
    \bx=G(\bz), \quad \bz\sim p_0(\bz),
\end{equation}
where the sample $\bx$ follows the generative distribution $p_g(\bx)$ and the latent variable $\bz$ follows the prior distribution $p_0(\bz)$, e.g., a standard normal distribution $\cN(\bzero,\bI)$. In GANs, a discriminator  $D:\cX\to[0,1]$ is learned to distinguish samples from $p_d(\bx)$ and $p_g(\bx)$ in an adversarial way
\begin{equation}
    \min_G\max_D\bbE_{\bx\sim p_d(\bx)}\log(D(\bx))+\bbE_{\bz\sim p_0(\bz)}\log(1-D(G(\bz))).
\end{equation}
\cite{goodfellow2014generative} point out that an optimal discriminator $D$ implies the density ratio between the data and generative distributions 
\begin{equation}
D(\mathbf{x})=\frac{p_{d}(\mathbf{x})}{p_{d}(\mathbf{x})+p_{g}(\mathbf{x})} ~~\Rightarrow ~~ \frac{p_d(\bx)}{p_g(\bx)}=\frac{1}{D(\bx)^{-1}-1}.
\label{eq:optimal-D}
\end{equation}

Markov Chain Monte Carlo (MCMC) refers to a kind of sampling methods that draw a chain of samples $\bx_{1:K}\in\cX^K$ from a target distribution $p_t(\bx)$. We denote the initial distribution as $p^x_0(\bx)$ and the proposal distribution as $q(\bx^\prime|\bx_k)$. With the Metropolis-Hastings (MH) algorithm, we accept the proposal $\bx^\prime\sim q(\bx'|\bx_k)$ with probability
\begin{equation}
\alpha\left(\mathbf{x}^{\prime}, \mathbf{x}_{k}\right)=\min \left(1, \frac{p_t\left(\mathbf{x}^{\prime}\right) q\left(\mathbf{x}_{k}|\bx^\prime\right)}{p_t\left(\mathbf{x}_{k}\right) q\left(\mathbf{x}^{\prime}|\bx_k\right)}\right) \in[0,1].   
\label{eq:mh-accept} 
\end{equation}
If $\bx^{\prime}$ is accepted, $\bx_{k+1}=\bx^{\prime}$, otherwise $\bx_{k+1}=\bx_{k}$. Under mild assumptions, the Markov chain is guaranteed to converge to $p_t(\bx)$ as $K\to\infty$. In practice, the sample efficiency of MCMC crucially depends on the proposal distribution to trade off between exploration and exploitation. 

\section{The Proposed REP-GAN}
\label{sec:emhgan}
In this section, we first review MH-GAN and point out the limitations. We then propose our structured dependent proposal to overcome these obstacles, and finally discuss its theoretical properties as well as practical implementations.

\subsection{From Independent Proposal to Dependent Proposal}

MH-GAN \cite{turner2019metropolis} first proposes to improve GAN sampling with MCMC. Specifically, given a perfect discriminator $D$ and a decent (but imperfect) generator $G$ after training, they take the data distribution $p_d(\bx)$ as the target distribution and use the generator distribution $p_g(\bx)$ as an independent proposal
\begin{equation}
\mathbf{x}^{\prime} \sim q\left(\mathbf{x}^{\prime} | \mathbf{x}_{k}\right)=q\left(\mathbf{x}^{\prime}\right)=p_g(\bx').
\label{eq:independence-proposal}
\end{equation}
With the MH criterion (Eqn. \eqref{eq:mh-accept}) and the density ratio (Eqn. \eqref{eq:optimal-D}), we should accept $\bx'$
with probability
\begin{equation}
\alpha_{\rm MH}\left(\mathbf{x}^{\prime}, \mathbf{x}_{k}\right)=\min \left(1, \frac{p_d\left(\mathbf{x}^{\prime}\right) q\left(\mathbf{x}_{k}\right)}{p_d\left(\mathbf{x}_{k}\right) q\left(\mathbf{x}^{\prime}\right)}\right)=\min \left(1, \frac{D\left(\mathbf{x}_{k}\right)^{-1}-1}{D\left(\mathbf{x}^{\prime}\right)^{-1}-1}\right).
\label{eq:mhgan-alpha}
\end{equation}
However, to achieve tractability, MH-GAN adopts an independent proposal $q(\bx^\prime)$ with poor sample efficiency. As the proposed sample $\bx'$ is independent of the current sample $\bx_k$, the difference between the two samples can be so large that it results in a very low acceptance probability. Consequently, samples can be trapped in the same place for a long time, leading to a very slow mixing of the chain. 

A natural solution is to take a \textit{dependent} proposal $q(\bx'|\bx_k)$ that will propose a sample $\bx'$ close to the current one $\bx_k$, which is more likely to be accepted. 
Nevertheless, the problem of such a dependent proposal is that its MH acceptance criterion
\begin{equation}
    \alpha_{\rm DEP}\left(\mathbf{x}^{\prime}, \mathbf{x}_{k}\right)=\min \left(1, \frac{p_d\left(\mathbf{x}^{\prime}\right) q\left(\mathbf{x}_{k}|\bx'\right)}{p_d\left(\mathbf{x}_{k}\right) q\left(\mathbf{x}^{\prime}|\bx_k\right)}\right),
    \label{eq:dependent-proposal-acceptance}
\end{equation}
is generally intractable because the data density $p_d(\bx)$ is unknown. Besides, it is hard to design a proper dependent proposal $q(\bx'|\bx_k)$ in the high dimensional sample space $\cX$ with complex landscape. These obstacles prevent us from adopting a dependent proposal that is more suitable for MCMC.

\subsection{A Tractable Structured Dependent Proposal with Reparameterized Markov Chains}
As discussed above, the major difficulty of a general dependent proposal $q(\bx'|\bx_k)$ is to compute the MH criterion. We show that it can be made tractable by considering an additional pairing Markov chain in the latent space.

As we know, samples of GANs lie in a low-dimensional manifold induced by the push-forward of the latent variable \cite{arjovsky2017wasserstein}. Suppose that at the $k$-th step of the Markov chain, we have a GAN sample $\bx_k$ with latent $\bz_k$. Instead of drawing a sample $\bx'$ directly from a sample-level proposal distribution $q(\bx'|\bx_k)$, we first draw a latent proposal $\bz'$ from a dependent latent proposal distribution $q(\bz'|\bz_k)$. Afterward, we push the latent $\bz'$ forward through the generator and get the output $\bx'$ as our sample proposal. 

As illustrated in Figure \ref{fig:pairing-chains}, our bottom-to-up proposal relies on the transition reparameterization with two pairing Markov chains in the sample space $\cX$ and the latent space $\cZ$. Hence we call it a REP (reparameterized) proposal. Through a learned generator, we transport the transition $\bx_k\to\bx'$ in the high dimensional space $\cX$ into the low dimensional space $\cZ$, $\bz_k\to\bz'$, which enjoys a much better landscape and makes it easier to design proposals in MCMC algorithms. For example, the latent target distribution is nearly standard normal when the generator is nearly perfect.
In fact, under mild conditions, the REP proposal distribution $q_{\rm REP}(\bx'|\bx_k)$ and the latent proposal distribution $q(\bz'|\bz_k)$ are tied with the following change of variables \cite{gemici2016normalizing,ben1999change}
\begin{equation}
    \log q_{\rm REP}(\bx'|\bx_k)= \log q(\bx'|\bz_k)
    =\log q(\bz'|\bz_k) - \frac{1}{2}\log\det J_{\bz'}^\top J_{\bz'},
    \label{eq:cmc-proposal-change-of-variable}
\end{equation}
where $J_\bz$ denotes the Jacobian matrix of the push-forward $G$ at $\bz$, i.e., $\left[J_\bz\right]_{ij} = \partial\,\bx_i/\partial\,\bz_j, \bx=G(\bz)$.

Nevertheless, it remains unclear whether we can perform the MH test to decide the acceptance of the proposal $\bx'$. Note that a general dependent proposal distribution does not meet a tractable MH acceptance criterion (Eqn. \eqref{eq:dependent-proposal-acceptance}). Perhaps surprisingly, 
it can be shown that with our structured REP proposal, 
the MH acceptance criterion is tractable for general latent proposals $q(\bz'|\bz_k)$. 

\begin{theorem}
\label{thm:emh}
Consider a Markov chain of GAN samples $\bx_{1:K}$ with initial distribution $p_g(\bx)$. For step $k+1$, we accept our REP proposal $\bx'\sim q_{\rm REP}(\bx'|\bx_k)$ with probability
\begin{equation}
\alpha_{\rm REP}\left(\mathbf{x}^{\prime}, \mathbf{x}_{k}\right)
=\min \left(1,~\frac{p_0(\bz')q(\bz_k|\bz')}{p_0(\bz_k)q(\bz'|\bz_k)}\cdot
\frac{D(\bx_k)^{-1}-1}{D(\bx')^{-1}-1}\right),
\label{eq:cmc-mh-acceptance}
\end{equation}
i.e. let $\bx_{k+1}=\bx'$ if $\bx'$ is accepted and $\bx_{k+1}=\bx_k$ otherwise. Further assume the chain is irreducible, aperiodic and not transient. Then, according to the Metropolis-Hastings algorithm, the stationary distribution of this Markov chain is the data distribution $p_d(\bx)$ \cite{gelman2013bayesian}.
\label{thm:main}
\end{theorem}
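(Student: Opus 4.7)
The plan is to verify directly that the stated $\alpha_{\rm REP}$ coincides with the standard Metropolis--Hastings acceptance ratio $\alpha_{\rm DEP}$ of Eqn.~\eqref{eq:dependent-proposal-acceptance} for target $p_d(\bx)$ and proposal $q_{\rm REP}(\bx'|\bx_k)$. Once this identity is established, the conclusion that $p_d(\bx)$ is the stationary distribution follows from the classical MH convergence theorem under the stated irreducibility, aperiodicity, and non-transience assumptions.

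First, I would decompose the data density ratio in $\alpha_{\rm DEP}$ through the generator distribution, writing
\[
\frac{p_d(\bx')}{p_d(\bx_k)} \;=\; \frac{p_g(\bx')}{p_g(\bx_k)}\cdot \frac{p_d(\bx')/p_g(\bx')}{p_d(\bx_k)/p_g(\bx_k)},
\]
and using the optimal-discriminator identity (Eqn.~\eqref{eq:optimal-D}) to rewrite the second factor as $(D(\bx_k)^{-1}-1)/(D(\bx')^{-1}-1)$. For the first factor I would invoke the push-forward change-of-variables formula (the matrix-volume version of Ben-Israel cited in the paper) to write $p_g(\bx) = p_0(\bz)\,(\det J_\bz^\top J_\bz)^{-1/2}$ at the latent preimage $\bz$. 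In parallel, I would apply Eqn.~\eqref{eq:cmc-proposal-change-of-variable} to both $q_{\rm REP}(\bx'|\bx_k)$ and $q_{\rm REP}(\bx_k|\bx')$, expressing each as the corresponding latent proposal density times a Jacobian factor evaluated at $\bz'$ and $\bz_k$ respectively.

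The decisive step is then a pairwise cancellation of Jacobians: the factor $(\det J_{\bz'}^\top J_{\bz'})^{-1/2}$ coming from $p_g(\bx')$ cancels the identical factor in $q_{\rm REP}(\bx'|\bx_k)$, and the $\bz_k$-Jacobian in $p_g(\bx_k)$ cancels the one appearing in $q_{\rm REP}(\bx_k|\bx')$. What remains is precisely $\frac{p_0(\bz')\,q(\bz_k|\bz')}{p_0(\bz_k)\,q(\bz'|\bz_k)}\cdot\frac{D(\bx_k)^{-1}-1}{D(\bx')^{-1}-1}$, which matches the formula in Eqn.~\eqref{eq:cmc-mh-acceptance}. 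Combined with the standard MH detailed-balance argument, this yields $p_d(\bx)$ as the stationary distribution.

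The main obstacle I anticipate is not conceptual but bookkeeping around the change of variables: since $\dim\cZ<\dim\cX$, $G$ is not a diffeomorphism, so one must justify using the matrix-volume form of the change of variables and ensure that each $\bx$ visited by the chain is consistently identified with a unique latent preimage $\bz$ (which is natural in our construction because $\bx'$ is itself defined as $G(\bz')$). Once this alignment is in place, the Jacobian cancellation is immediate and the rest of the proof reduces to quoting the standard MH convergence result.
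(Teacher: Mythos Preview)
Your proposal is correct and follows essentially the same route as the paper: both arguments multiply and divide by $p_g$, invoke the change-of-variables formula for $p_g$ and for the REP proposal, replace $p_d/p_g$ by the discriminator ratio via Eqn.~\eqref{eq:optimal-D}, and observe that the Jacobian factors cancel pairwise (the $\bz'$-Jacobian from $p_g(\bx')$ against the one in $q_{\rm REP}(\bx'|\bx_k)$, and likewise at $\bz_k$). Your closing remark about justifying the matrix-volume change of variables when $\dim\cZ<\dim\cX$ and keeping a consistent latent preimage is a point the paper leaves implicit, so your bookkeeping concern is well placed but does not alter the argument.
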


\begin{proof}
Note that similar to Eqn \eqref{eq:cmc-proposal-change-of-variable},  we also have the change of variables between $p_g(\bx)$ and $p_0(\bz)$,
\begin{equation}
\log p_g(\bx)\rvert_{\bx=G(\bz)}=\log p_0(\bz) - \frac{1}{2} \log \det J_{\bz}^\top J_{\bz}.
\label{eq:generator-change-of-variable}
\end{equation}
According to \cite{gelman2013bayesian}, the assumptions that the chain is irreducible, aperiodic, and not transient make sure that the chain has a unique stationary distribution, and the MH algorithm ensures that this stationary distribution equals to the target distribution $p_d(\bx)$.
Thus we only need to show that the MH criterion in Eqn. \eqref{eq:cmc-mh-acceptance} holds. Together with Eqn. \eqref{eq:optimal-D}, \eqref{eq:dependent-proposal-acceptance} and \eqref{eq:cmc-proposal-change-of-variable}, we have
\begin{equation}
\begin{aligned}
\alpha_{\rm REP}(\bx',\bx_k)
&=\frac{p_d\left(\mathbf{x}^{\prime}\right)  q\left(\mathbf{x}_{k}|\bx'\right)}{p_d\left(\mathbf{x}_{k}\right)  q\left(\mathbf{x}^{\prime}|\bx_k\right)}
=\frac{{p_d\left(\mathbf{x}^{\prime}\right)}  q(\bz_k|\bz')
\left(\det J_{\bz_k}^\top J_{\bz_k}\right)^{-\frac{1}{2}}
{p_g(\bx_k)}p_g(\bx')}{{p_d\left(\mathbf{x}_{k}\right)}  q(\bz'|\bz_k)
\left(\det J_{\bz'}^\top J_{\bz'}\right)^{-\frac{1}{2}}
{p_g(\bx')}p_g(\bx_k)}\\
&=\frac{q(\bz_k|\bz'){\left(\det J_{\bz_k}^\top J_{\bz_k}\right)^{-\frac{1}{2}}}p_0(\bz'){\left(\det J_{\bz'}^\top J_{\bz'}\right)^{-\frac{1}{2}}}(D(\bx_k)^{-1}-1)}
{q(\bz'|\bz_k){\left(\det J_{\bz'}^\top J_{\bz'}\right)^{-\frac{1}{2}}}p_0(\bz_k){\left(\det J_{\bz_k}^\top J_{\bz_k}\right)^{-\frac{1}{2}}}(D(\bx')^{-1}-1)}\\
&=\frac{p_0(\bz')q(\bz_k|\bz')(D(\bx_k)^{-1}-1)}{p_0(\bz_k)q(\bz'|\bz_k)(D(\bx')^{-1}-1)},
\end{aligned}
\label{eq:dcgan-cmc-alpha}
\end{equation}
which is the acceptance ratio as desired. Q.E.D.
\end{proof}

The theorem above demonstrates the following favorable properties of our method:
\begin{itemize}
\item The discriminator score ratio is the same as $\alpha_{\rm MH}(\bx',\bx_k)$, but MH-GAN is restricted to a specific independent proposal. Our method instead works for any latent proposal $q(\bz'|\bz_k)$. When we take $q(\bz'|\bz_k)=p_0(\bz')$, our method reduces to MH-GAN. 
\item Compared to $\alpha_{\rm DEP}(\bx',\bx_k)$ of a general dependent proposal (Eqn. \eqref{eq:dependent-proposal-acceptance}), the unknown data distributions terms are successfully cancelled in the reparameterized acceptance criterion.
\item The reparameterized MH acceptance criterion becomes tractable as it only involves the latent priors, the latent proposal distributions, and the discriminator scores.
\end{itemize}

Combining the REP proposal $q_{\rm REP}(\bx'|\bx_k)$ and its tractable MH criterion $\alpha_{\rm REP}(\bx',\bx_k)$, we have developed a novel sampling method for GANs, coined as REP-GAN. See Appendix \ref{algo:REP-GAN} for a detailed description.
Moreover, our method can serve as a general approximate inference technique for Bayesian models by bridging MCMC and GANs. Previous works \cite{marzouk2016introduction,titsias2017learning,hoffman2019neutra} also propose to avoid the bad geometry of a complex probability measure by reparameterizing the Markov transitions into a simpler measure. However, these methods are limited to explicit invertible mappings without dimensionality reduction. With this work, we are the first to show that it is also tractable to conduct such model-based reparameterization with implicit models like GANs.

\subsection{A Practical Implementation}
\label{sec:proposal}
REP-GAN enables us to utilize the vast literature of existing MCMC algorithms \cite{neal2010MCMC} to design dependent proposals for GANs. We take Langevin Monte Carlo (LMC) as an example.
As an Euler-Maruyama discretization of the Langevin dynamics, LMC updates the Markov chain with
\begin{equation}
    \bx_{k+1}=\bx_k+\frac{\tau}{2}\nabla_{\bx}\log p_t(\bx_k)+\sqrt{\tau}\cdot\bvarepsilon,~~\bvarepsilon\sim\cN(\bzero,\bI),
    \label{eqn:langevin-steps}
\end{equation}
for a target distribution $p_t(\bx)$. Compared to MH-GAN, LMC utilizes the gradient information to explore the energy landscape more efficiently. However, if we directly take the (unknown) data distribution $p_d(\bx)$ as the target distribution $p_t(\bx)$, LMC does not meet a tractable update rule. 

As discussed above, the reparameterization of REP-GAN makes it easier to design transitions in the low-dimensional latent space. Hence, we instead propose to use LMC for the latent Markov chain.
We assume that the data distribution also lies in the low-dimensional manifold induced by the generator, i.e., 
$\operatorname{Supp}\left(p_d\right)\subset\operatorname{Im}(G)$. This implies that the data distribution $p_d(\bx)$ also has a pairing distribution in the latent space, denoted as $p_t(\bz)$. They are tied with the change of variables
\begin{equation}
\log p_d(\bx)\rvert_{\bx=G(\bz)}=\log p_t(\bz) - \frac{1}{2} \log\det\, J_{\bz}^\top J_{\bz},
\end{equation}
Taking $p_t(\bz)$ as the (unknown) target distribution of the latent Markov chain, we have the following Latent LMC (L2MC) proposal 
\begin{equation}
\begin{aligned}
\bz'&=\bz_k+\frac{\tau}{2}\nabla_{\bz}\log p_t(\bz_k)+\sqrt{\tau}\cdot\bvarepsilon\\
&=\bz_k+\frac{\tau}{2}\nabla_{\bz}\log\frac{p_t(\bz_k)
\left(\det J_{\bz_k}^\top J_{\bz_k}\right)^{-\frac{1}{2}}
}{p_0(\bz_k)
\left(\det J_{\bz_k}^\top J_{\bz_k}\right)^{-\frac{1}{2}}
}+\frac{\tau}{2}\nabla_{\bz}\log p_0(\bz_k)+\sqrt{\tau}\cdot\bvarepsilon\\
&=\bz_k+\frac{\tau}{2}\nabla_{\bz}\log\frac{p_d(\bx_k)}{p_g(\bx_k)}+\frac{\tau}{2}\nabla_{\bz}\log p_0(\bz_k)+\sqrt{\tau}\cdot\bvarepsilon\\
&=\bz_k-\frac{\tau}{2}\nabla_{\bz}\log(D^{-1}(\bx_k)-1)+\frac{\tau}{2}\nabla_{\bz}\log p_0(\bz_k)+\sqrt{\tau}\cdot\bvarepsilon, \quad\bvarepsilon\sim\cN(\bzero,\bI),
\end{aligned}
\label{eq:langevin}
\end{equation}
where $\bx_k=G(\bz_k)$. As we can see, L2MC is made tractable by our structured dependent proposal with pairing Markov chains. DDLS \cite{che2020your} proposes a similar Langevin proposal by formalizing GANs as an implicit energy-based model, while here we provide a straightforward derivation through reparameterization. Our major difference to DDLS is that REP-GAN also includes a tractable MH correction step (Eqn. \eqref{eq:cmc-mh-acceptance}), which accounts for the numerical errors introduced by the discretization in Eqn. \eqref{eqn:langevin-steps} and ensures that detailed balance holds. 

We give a detailed description of the algorithm procedure of our REP-GAN in Algorithm \ref{algo:REP-GAN}.

\subsection{Extension to WGAN}
Our method can also be extended to other kinds of GAN, like Wasserstein GAN (WGAN) \cite{arjovsky2017wasserstein}.
The WGAN objective is
\begin{align}
\min_G\max_D\,\mathbb{E}_{\bx\sim p_{d}(\bx)}[D(\bx)]-\mathbb{E}_{\bx\sim p_{g}(\bx)}[D(\bx)],
\end{align}
where $D:\cX\to\bbR$ is restricted to be a Lipschitz function. Under certain conditions, WGAN also implies an approximate estimation of the density ratio \cite{che2020your},
\begin{equation}
D(\bx)\approx\log\frac{p_d(\bx)}{p_g(\bx)}+\text{const}\quad\Rightarrow\quad\frac{p_d(\bx)}{p_g(\bx)}\approx\exp(D(\bx))\cdot \text{const}.
\end{equation}
Following the same derivations as in Eqn. \eqref{eq:dcgan-cmc-alpha} and \eqref{eq:langevin}, we will have the WGAN version of REP-GAN. Specifically, with $\bx_k=G(\bz_k)$, the L2MC proposal follows
\begin{equation}
\bz'=\bz_k+\frac{\tau}{2}\nabla_{\bz}D(\bx_k)+\frac{\tau}{2}\nabla_{\bz}\log p_0(\bz_k)+\sqrt{\tau}\cdot\bvarepsilon, \quad\bvarepsilon\sim\cN(\bzero,\bI),
\label{eq:wgan-proposal}
\end{equation}
and the MH acceptance criterion is 
\begin{equation}
\alpha_{REP-W}\left(\mathbf{x}^{\prime},\mathbf{x}_{k}\right)=\min \left(1,~ \frac{q(\bz_k|\bz')p_0(\bz')}{q(\bz'|\bz_k)p_0(\bz_k)}\cdot\frac{\exp\left(D(\bx')\right)}{\exp\left(D(\bx_k)\right)}\right).
\end{equation}

\begin{algorithm}[t]
\textbf{Input:} trained GAN with (calibrated) discriminator $D$ and generator $G$, Markov chain length $K$, latent prior distribution $p_0(\bz)$, latent proposal distribution $q(\bz'|\bz_k)$;\\
\textbf{Output:} an improved GAN sample $\bx_K$;
\begin{algorithmic}
\STATE Draw an initial sample $\bx_1$: 1) draw initial latent $\bz_1\sim p_0(\bz)$ and 2) push forward $\bx_1=G(\bz_1)$;
\FOR{each step $k\in[1,K-1]$}
\STATE Draw a REP proposal $\bx'\sim q_{\rm REP}(\bx'|\bx_k)$: 1) draw a latent proposal $\bz'\sim q(\bz'|\bz_k)$, and 2) push forward $\bx'=G(\bz')$;
\STATE Calculate the MH acceptance criterion $\alpha_{\rm REP}(\bx_k,\bx')$ following Eqn. \eqref{eq:cmc-mh-acceptance};
\STATE Decide the acceptance of $\bx'$ with probability $\alpha_{\rm REP}(\bx_k,\bx')$;
\IF{$\bx'$ is accepted}
\STATE{Let $\bx_{k+1}=\bx', \bz_{k+1}=\bz'$}
\ELSE
\STATE{Let $\bx_{k+1}=\bx_k,\bz_{k+1}=\bz_k$}
\ENDIF
\ENDFOR
\end{algorithmic}
\caption{GAN sampling with Reparameterized Markov chains (REP-GAN)}
\label{algo:REP-GAN}
\end{algorithm}

\section{Experiments}
\label{sec:experiments}
We evaluate our method  on two synthetic datasets and two real-world image datasets as follows.

\subsection{Manifold Dataset}
Following DOT \cite{tanaka2019discriminator} and DDLS \cite{che2020your}, we apply REP-GAN to the Swiss Roll dataset, where data samples lie on a Swiss roll manifold in the two-dimensional space. We construct the dataset by scikit-learn with 100,000 samples, and train a WGAN with the same architecture as DOT and DDLS, where both the generator and discriminator are fully connected neural networks with leaky ReLU nonlinearities. We optimize the model using the Adam optimizer, with learning rate $0.0001$. After training, we draw 1,000 samples with different sampling methods. Following previous practice, we initialize a Markov chain with a GAN sample, run it for $K=100$ steps, and collect the last example for evaluation.

As shown in Figure \ref{fig:swissroll}, with appropriate step size ($\tau=0.01$), the gradient-based methods (DDLS and REP-GAN) outperform independent proposals (DRS and MH-GAN) by a large margin, while DDLS is more discontinuous on shape compared to REP-GAN. In DDLS, when the step size becomes too large ($\tau=0.1,1$), the numerical error of the Langevin dynamics becomes so large that the chain either collapses or diverges. In contrast, those bad proposals are rejected by the MH correction steps of REP-GAN, which prevents the misbehavior of the Markov chain.

\begin{figure}[t]
    \centering
    \includegraphics[width=\linewidth]{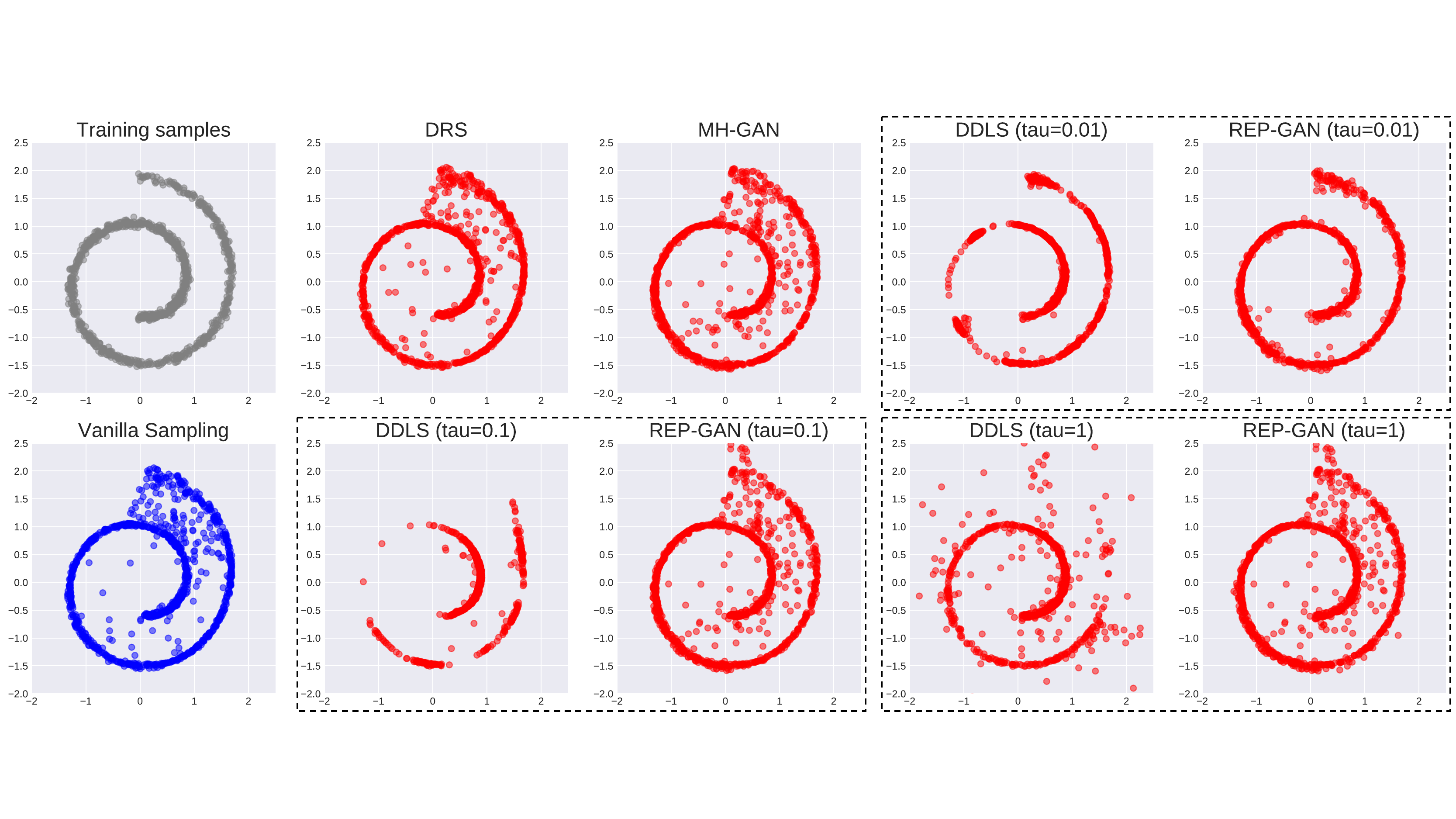}
    \caption{Visualization of samples with different sampling methods on the Swiss Roll dataset. Here tau denotes the Langevin step size in Eqn. \eqref{eq:wgan-proposal}.}
    \label{fig:swissroll}
\end{figure}

\begin{figure}[t]
    \centering
    \includegraphics[width=\linewidth]{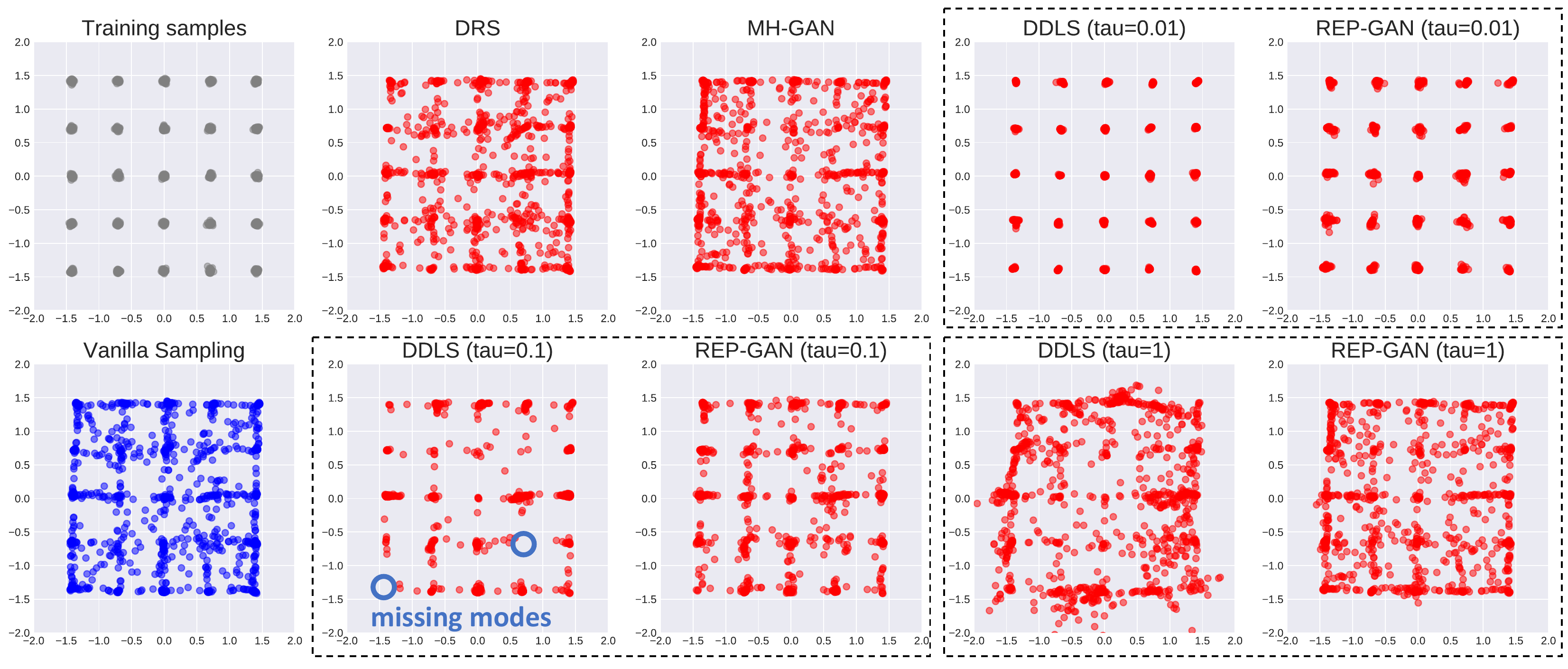}
    \caption{Visualization of samples with different sampling methods on the 25-Gaussians dataset. Here $\tau$ denotes the Langevin step size in Eqn. \eqref{eq:wgan-proposal}.}
    \label{fig:25gaussians}
\end{figure}

\begin{figure}[t]
    \centering
    \includegraphics[width=\linewidth]{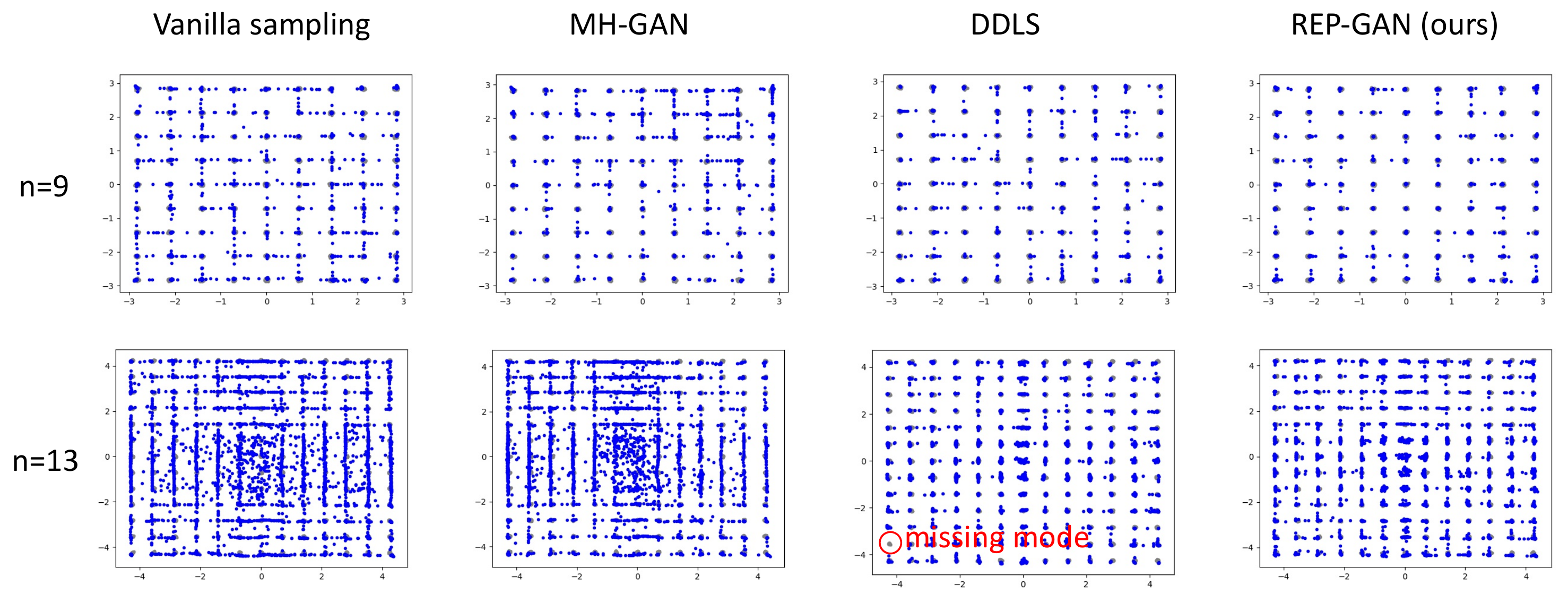}
    \caption{Visualization of the mixture-of-Gaussian experiments with 9x9 (1st row) and 13x13 (2nd row) modes with proper step size $\tau=0.01$. True data points are shown in grey (in background), and generated points are shown in blue. }
    \label{fig:more-modes}
\end{figure}

\subsection{Multi-modal Dataset}

As GANs are known to suffer from the mode collapse problem \cite{goodfellow2016nips}, we also compare different GAN sampling methods in terms of modeling multi-modal distributions. Specifically, we consider the 25-Gaussians dataset that is widely discussed in previous work \cite{azadi2019discriminator,turner2019metropolis,che2020your}. The dataset is generated by a mixture of twenty-five two-dimensional isotropic Gaussian distributions with variance 0.01, and means separated by 1, arranged in a grid. We train a small GAN with the standard WGAN-GP objective following the setup in  \cite{tanaka2019discriminator}. After training, we draw 1,000 samples with different sampling methods. 

As shown in Figure \ref{fig:25gaussians}, compared to MH-GAN, the gradient-based methods (DDLS and ours) produce much better samples close to the data distribution with proper step size ($\tau=0.01$). Comparing DDLS and our REP-GAN, we can notice that DDLS tends to concentrate so much on the mode centers that its standard deviation can be even smaller than the data distribution. Instead, our method preserves more sample diversity while concentrating on the mode centers. This difference becomes more obvious as the step size $\tau$ becomes larger. When $\tau=0.1$, as marked with blue circles, DDLS samples become so concentrated that some modes are even missed. When $\tau=1$, DDLS samples diverge far beyond the $5\times5$ grid. 
In comparison, our REP-GAN is more stable because the MH correction steps account for the numerical errors caused by large $\tau$.

These distinctions also become even more obvious when we scale to more modes. As shown in Figure \ref{fig:more-modes}, we also compare them w.r.t. mixture of Gaussians with $9\times9$ and $13\times13$ modes, respectively. Under the more challenging scenarios, we can see that the gradient-based methods still consistently outperforms MH-GAN. Besides, our REP-GAN has a more clear advantage over DDLS. Specifically, for $9\times9$ modes, our REP-GAN produces samples that are less noisy, while preserving all the modes. For $13\times13$ modes, DDLS makes a critical mistake that it drops one of the modes. As discussed above, we believe this is because DDLS has a bias towards regions with high probability, while ignoring the diversity of the distribution. In comparison, REP-GAN effectively prevents such bias by the MH correction steps.

\subsection{Real-world Image Dataset}

Following MH-GAN \cite{turner2019metropolis}, we conduct experiments on two real-world image datasets, CIFAR-10 and CelebA, for two models, DCGAN \cite{radford2015unsupervised} and WGAN \cite{arjovsky2017wasserstein}. We adopt the DCGAN generator and discriminator networks as our backbone networks.
Following the conventional evaluation protocol, we initialize each Markov chain with a GAN sample, run it for 640 steps, and take the last sample for evaluation. We collect 50,000 samples to evaluate the Inception Score\footnote{For fair comparison, our training and evaluation follows the the official code of MH-GAN \cite{turner2019metropolis}: \url{https://github.com/uber-research/metropolis-hastings-gans}} \cite{salimans2016improved}. 
The step size $\tau$ of our L2MC proposal is $0.01$ on CIFAR-10 and $0.1$ on CelebA. We calibrate the discriminator with Logistic Regression as in \cite{turner2019metropolis}.

\begin{table}[t]\centering
    \caption{Inception Scores of different sampling methods on CIFAR-10 and CelebA, with the DCGAN and WGAN backbones.}
    \label{tab:cifar-10-incep-results}
    \begin{tabular}{lcccc}
    \toprule
    \multirow{2}{*}{Method} &  \multicolumn{2}{c}{CIFAR-10} &  \multicolumn{2}{c}{CelebA} \\
    & DCGAN & WGAN & DCGAN & WGAN \\
    \midrule
    GAN  & 3.219 & 3.740  & 2.332 & 2.788 \\
    DRS \cite{azadi2019discriminator} & 3.073 & 3.137 & 2.869 & 2.861 \\
    MH-GAN \cite{turner2019metropolis}  & 3.225 & 3.851 & \textbf{3.106} & 2.889 \\
    DDLS \cite{che2020your} & 3.152 & 3.547 & 2.534 & 2.862 \\    
    REP-GAN (ours)  & \textbf{3.541} & \textbf{4.035} & 2.686 & \textbf{2.943} \\
    \bottomrule
    \end{tabular}
\end{table}

\begin{table}[t]\centering
    \caption{Average Inception Score (a) and acceptance ratio (b) vs. training epochs with DCGAN on CIFAR-10.}
    \label{tab:training-epochs}
\begin{subtable}[t]{\linewidth}\centering
    \caption{Inception Score (mean $\pm$ std)}
    \label{tab:training-epochs-inception}
    \resizebox{\linewidth}{!}{
    \begin{tabular}{lcccccc}
        \toprule
        Epoch & 20 & 21 & 22 & 23 & 24 \\
        \midrule
        GAN ~& 2.482 $\pm$ 0.027 ~& 3.836 $\pm$ 0.046 ~& 3.154 $\pm$ 0.014 ~& 3.383 $\pm$ 0.046 ~& 3.219 $\pm$ 0.036 \\
        MH-GAN ~& 2.356 $\pm$ 0.023 ~& 3.891 $\pm$ 0.040 ~& 3.278 $\pm$ 0.033 ~& 3.458 $\pm$ 0.029 ~& 3.225 $\pm$ 0.029 \\
        DDLS ~& 2.419 $\pm$ 0.021 ~& 3.332 $\pm$ 0.025 ~& 2.996 $\pm$ 0.035 ~& 3.255 $\pm$ 0.045 ~& 3.152 $\pm$ 0.028 \\
        REP-GAN ~& \textbf{2.487} $\pm$ 0.019 ~& \textbf{3.954} $\pm$ 0.046 ~& \textbf{3.294} $\pm$ 0.030 ~& \textbf{3.534} $\pm$ 0.035 ~& \textbf{3.541} $\pm$ 0.038 \\
        \bottomrule
    \end{tabular}    
    }
\end{subtable}
\begin{subtable}[t]{\linewidth}\centering
    \caption{Average Acceptance Ratio (mean $\pm$ std)}
    \label{tab:training-epochs-acceptance}
    \resizebox{\linewidth}{!}{
    \begin{tabular}{lcccccc}
        \toprule
        Epoch & 20 & 21 & 22 & 23 & 24 \\
        \midrule
        MH-GAN ~& 0.028 $\pm$ 0.143 ~& 0.053 $\pm$ 0.188 ~& 0.060 $\pm$ 0.199 ~& 0.021 $\pm$ 0.126 ~& 0.027 $\pm$ 0.141 \\
        REP-GAN ~& \textbf{0.435} $\pm$ 0.384 ~& \textbf{0.350} $\pm$ 0.380 ~& \textbf{0.287} $\pm$ 0.365 ~& \textbf{0.208} $\pm$ 0.335 ~& \textbf{0.471} $\pm$ 0.384  \\
        \bottomrule
    \end{tabular}    
    }
\end{subtable}

\end{table}

\begin{table}[t]\centering
    \caption{Ablation study of our REP-GAN with 
    Inception Scores (IS) and acceptance ratios on CIFAR-10 with two backbone models, DCGAN and WGAN.
    }
    \label{tab:ablation-study}
    \resizebox{\linewidth}{!}{
    \begin{tabular}{lccccc}
    \toprule
    \multirow{2}{*}{Method~} & \multicolumn{2}{c}{DCGAN} &  \multicolumn{2}{c}{WGAN}  \\
    &  Accept Ratio & IS & Accept Ratio & IS \\
    \midrule
    REP-GAN & \textbf{0.447} $\pm$ 0.384 ~& \textbf{3.541} $\pm$ 0.038  ~& \textbf{0.205} $\pm$ 0.330 ~& \textbf{4.035} $\pm$ 0.036 \\
    REP-GAN w/o REP proposal~~ & 0.027 $\pm$ 0.141 & 3.225 $\pm$ 0.029 & 0.027 $\pm$ 0.141 & 3.851 $\pm$ 0.044 \\
    REP-GAN w/o MH rejection & -  & 3.152 $\pm$ 0.028 & -  & 3.547 $\pm$ 0.029 \\
    \bottomrule
    \end{tabular}
    }

\end{table}

From Table \ref{tab:cifar-10-incep-results}, we can see our method  outperforms the state-of-the-art sampling methods in most cases. 
In Table \ref{tab:training-epochs}, we also present the average Inception Score and acceptance ratio during the training process.
As shown in Table \ref{tab:training-epochs-inception}, our REP-GAN can still outperform previous sampling methods consistently and significantly. Besides, in Table \ref{tab:training-epochs-acceptance}, we find that the average acceptance ratio of MH-GAN is lower than 0.05 in most cases, which is extremely low. While with our reparameterized dependent proposal, REP-GAN achieves an acceptance ratio between 0.2 and 0.5, which is known to be a relatively good tradeoff for MCMC algorithms.

\subsection{Algorithmic Analysis}

\paragraph{\textbf{Ablation Study}} We conduct an ablation study of the proposed sampling algorithm, REP-GAN, and  the results are shown in Table \ref{tab:ablation-study}.  We can see that without our proposed reparameterized (REP) proposal, the acceptance ratio is very small (with an independent proposal instead). Consequently, the sample quality degrades significantly. Also, we can find that the MH correction step also matters a lot, without which the sample quality of Langevin sampling becomes even worse than the independent proposal. The ablation study shows the necessity of both REP proposal and MH rejection steps in the design of our REP-GAN.

\begin{figure}[t]\centering
\includegraphics[width=\linewidth]{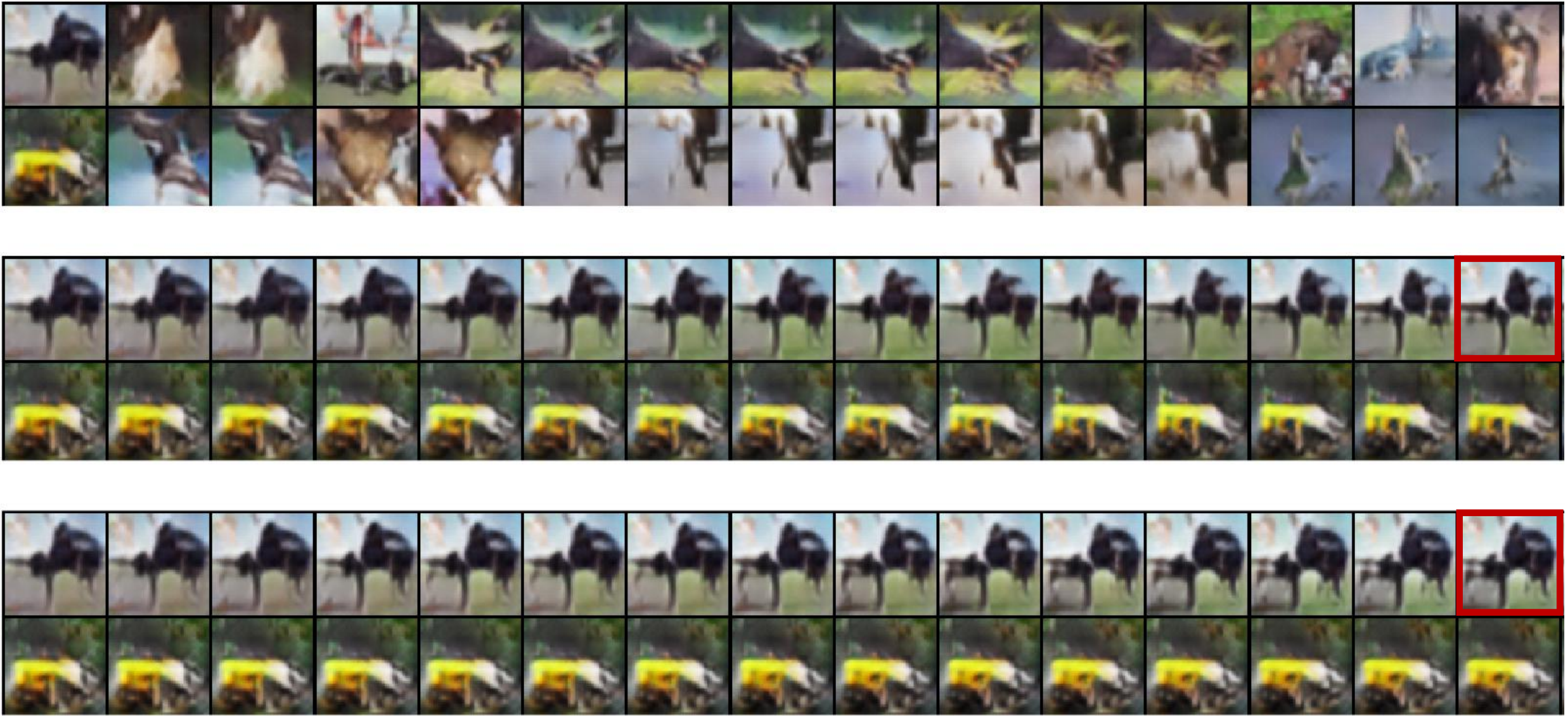}
\caption{The first 15 steps of two Markov chains with the same initial samples, generated by 
MH-GAN (top), DDLS (middle), and REP-GAN (bottom).}
\label{fig:sampled-chain}
\end{figure}

\begin{figure}[h]
    \centering
    \includegraphics[width=\linewidth]{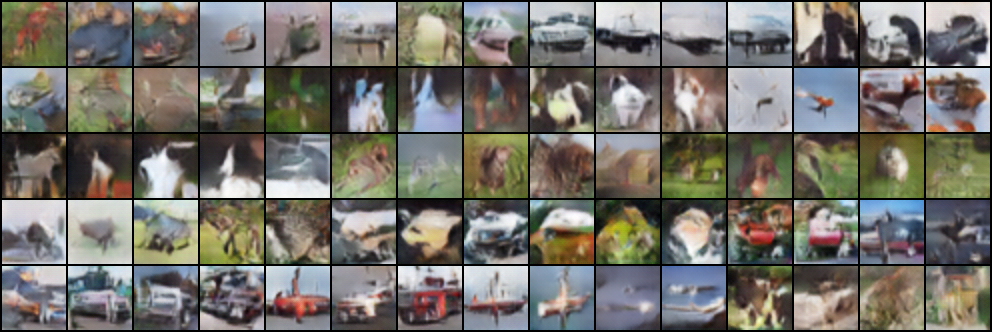}
    \caption{Visualization of 5 Markov chains of our REP proposals (i.e., REP-GAN without the MH rejection steps) with a large step size ($\tau=1$).}
    \label{fig:proposals}
\end{figure}

\paragraph{\textbf{Markov Chain Visualization}} In Figure \ref{fig:sampled-chain}, we demonstrate two Markov chains sampled with different methods.  We can see that 
MH-GAN is often trapped in the same place because of the independent proposals. DDLS and REP-GAN instead gradually refine the samples with gradient steps. In addition, compared the gradient-based methods, we can see that the MH rejection steps of REP-GAN help avoid some bad artifacts in the images. For example, in the camel-like images marked in red, the body of the camel is separated in the sample of DDLS (middle) while it is not in the sample of REP-GAN (bottom). 
Note that, the evaluation protocol only needs the last step of the chain, thus we prefer a small step size that finetunes the initial samples for better sample quality. As shown in Figure \ref{fig:proposals}, our REP proposal can also produce very diverse images with a large step size.

\paragraph{\textbf{Computation Overhead}} We also compare the computation cost of the gradient-based sampling methods, DDLS and REP-GAN. They take 88.94 s and 88.85s, respectively, hence the difference is negligible. Without the MH-step, our method takes 87.62s, meaning that the additional MH-step only costs 1.4\% computation overhead, which is also negligible, but it brings a significant improvement of sample quality as shown in Table \ref{tab:ablation-study}.

\section{Conclusion}
In this paper, we have proposed a novel method, REP-GAN, to improve the sampling of GAN. We devise a structured dependent proposal that reparameterizes the sample-level transition of GAN into the latent-level transition. More importantly, we first prove that this general proposal admits a tractable MH criterion. 
Experiments show our method does not only improve sample efficiency but also demonstrate state-of-the-art sample quality on benchmark datasets over existing sampling methods.

\section*{Acknowledgement}
Yisen Wang is supported by the National Natural Science Foundation of China under Grant No. 62006153 and Project
2020BD006 supported by PKU-Baidu Fund. Jiansheng Yang is supported by the National Science Foundation of
China under Grant No. 11961141007. Zhouchen
Lin is supported by the National Natural Science Foundation of China (Grant No.s 61625301 and 61731018), Project 2020BD006 supported by PKU-Baidu Fund, Major Scientific Research Project of Zhejiang Lab (Grant No.s
2019KB0AC01 and 2019KB0AB02), and Beijing Academy
of Artificial Intelligence.

  \bibliographystyle{splncs04}

\appendix
\section{Appendix}

\subsection{Assumptions and Implications}

Note that our method needs a few assumptions on the models for our analysis to hold. Here we state them explicitly and discuss their applicability and potential impacts.

\begin{assumption}
The generator mapping $G:\bbR^n\to\bbR^m (n<m)$ is injective, and its Jacobian matrix $\left[\frac{\partial\,G(\bz)}{\partial\,\bz}\right]$ of size $m\times n$, has full column rank for all $\bz\in\bbR^n$.
\end{assumption}

For the change of variables in Eqn. 11 and 13 to hold, according to \cite{ben1999change}, we need the mapping to be injective and its Jaobian should have full column rank. A mild sufficient condition for injectivity is that the generator only contains (non-degenerate) affine layers and injective non-linearities, like LeakyReLU. It is not hard to show that such a condition also implies the full rankness of the Jacobian. In fact, this architecture has already been found to benefit GANs and achieved state-of-the-art results \cite{tang2020lessons}. The affine layers here are also likely to be non-degenerate because their weights are randomly initialized and typically will not degenerate in practice during the training of GANs.

\begin{assumption}
The discriminator $D$ offers a perfect estimate the density ratio between the generative distribution $p_g(\bx)$ and the data distribution $p_d(\bx)$ as in Eqn. 3.
\end{assumption}

This is a common, critical, but less practical assumption among the existing sampling methods of GANs. It is unlikely to hold exactly in practice, because during the alternative training of GANs, the generator is also changing all the time, and the a few updates of the discriminator cannot fully learn the corresponding density ratio. Nevertheless, we think it can capture a certain extent information of density ratio which explains why the sampling methods can consistently improve over the baseline at each epoch. 

From our understanding, the estimated density ratio is enough to push the generator better but not able to bring it up to the data distribution. This could be the reason why the Inception scores obtained by the sampling methods, can improve over the baselines but cannot reach up to that of real data and fully close the gap, even with very long run of the Markov chains.

Hence, there is still much room for improvement. To list a few, one can develop mechanisms that bring more accurate density ratio estimate, or relax the assumptions for the method to hold, or establishing estimation error bounds. Overall, we believe GANs offer an interesting alternative scenario for the development of sampling methods.

\setcounter{table}{4}
\setcounter{figure}{5}

\begin{table}[t]\centering
    \caption{Fréchet Inception Distance (FID) of different MCMC sampling methods on CIFAR-10 and CelebA.}
    \label{tab:cifar-10-fid-results}
    \begin{tabular}{lcccc}
    \toprule
    \multirow{2}{*}{Method} &  \multicolumn{2}{c}{CIFAR-10} &  \multicolumn{2}{c}{CelebA} \\
    & DCGAN & WGAN & DCGAN & WGAN \\
    \midrule
GAN    & 100.363 & 153.683 & 227.892 & 207.545 \\
MH-GAN \cite{turner2019metropolis} & 100.167 & 143.426 & {\bf 227.233} & 207.143 \\
DDLS \cite{che2020your}   & 145.981 & 193.558 & 269.840 & 232.522 \\
REP-GAN (ours)   & {\bf 99.798}  & {\bf 143.322} & 230.748 & {\bf 207.053} \\    
    \bottomrule
    \end{tabular}
\end{table}

\begin{figure}[t]
    \centering
    \includegraphics[width=\linewidth]{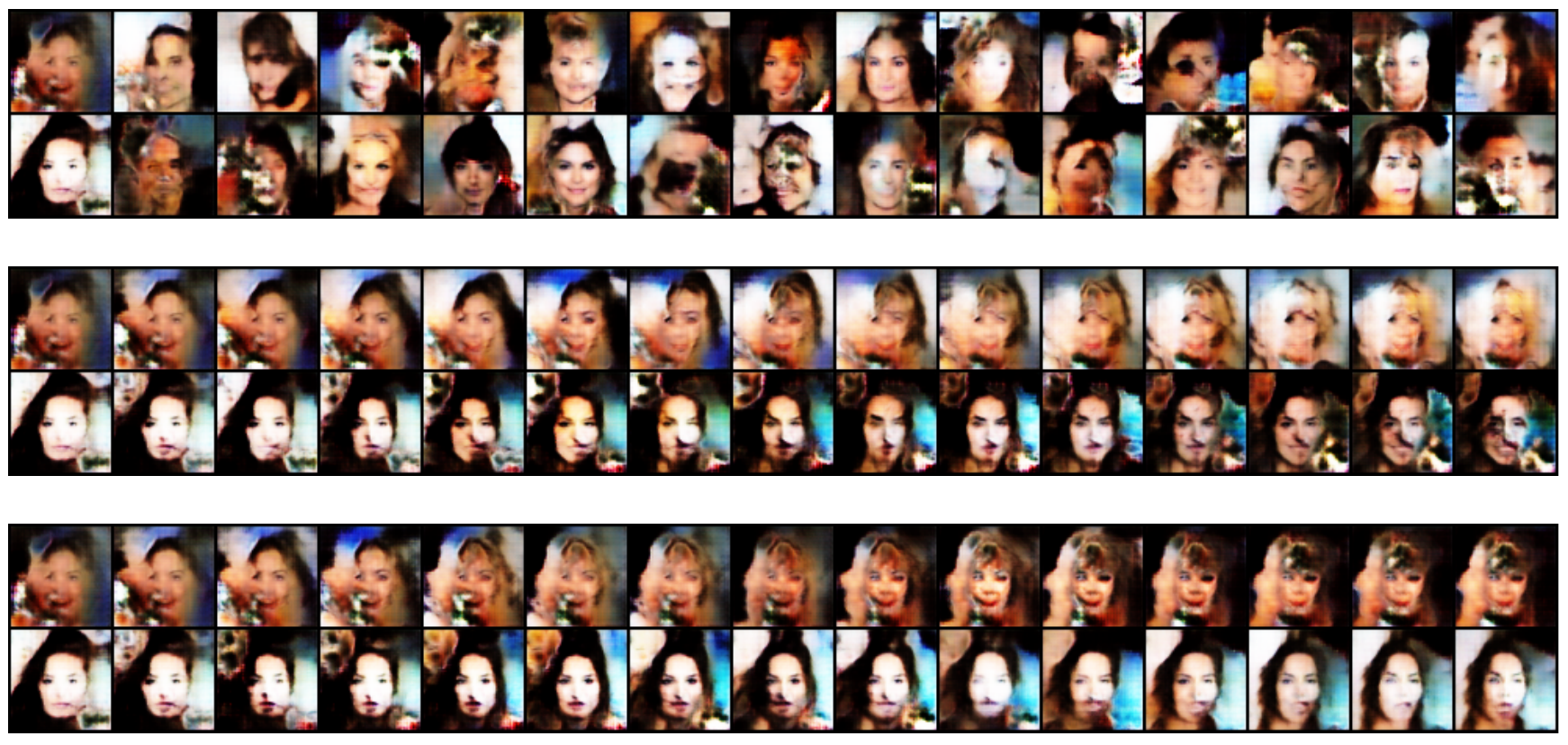}
    \caption{Visualization of the Markov chains of MH-GAN (top), DDLS (middle), and REP-GAN (bottom) on CelebA with WGAN backbone.}
    \label{fig:celeba-chains}
\end{figure}

\subsection{Additional Empirical Results}
Here we list some additional empirical results of our methods.

\paragraph{\textbf{Fréchet Inception Distance}}
We additionally report the comparison of Fréchet Inception Distance (FID) in Table \ref{tab:cifar-10-fid-results}. 
We can see the ranks are consistent with the Inception scores in Table 2 and our method is superior in most cases.

\paragraph{\textbf{Markov Chain Visualization on CelebA}}
We demonstrate two Markov chains on CelebA with different MCMC sampling methods of WGAN in Figure \ref{fig:celeba-chains}. We can see that on CelebA, the acceptance ratio of MH-GAN becomes much higher than that on CIFAR-10. Nevertheless, the sample quality is still relatively low. In comparison, the gradient-based method can gradually refine the samples with Langevin steps, and our REP-GAN can alleviate image artifacts with MH correction steps.

\end{document}